\def\eqref#1{equation~\ref{#1}}
\def\1{\bm{1}}
\def\rvc{{\mathbf{c}}}
\def\rvm{{\mathbf{m}}}
\def\vu{{\bm{u}}}
\def\vv{{\bm{v}}}
\def\vy{{\bm{y}}}
\def\mI{{\bm{I}}}
\def\mQ{{\bm{Q}}}
\def\mY{{\bm{Y}}}
\DeclareMathAlphabet{\mathsfit}{\encodingdefault}{\sfdefault}{m}{sl}
\SetMathAlphabet{\mathsfit}{bold}{\encodingdefault}{\sfdefault}{bx}{n}
\newcommand{\sigmoid}{\sigma}
\newtheorem{prop}{Proposition}
\begin{document}

\flushbottom

\title{VarFA: A Variational Factor Analysis Framework \\ For Efficient Bayesian Learning Analytics
}
%
\numberofauthors{4}

%
%
%
%

\author{Zichao Wang$^{1}$, Yi Gu$^{2}$, Andrew S. Lan$^{3}$, Richard G. Baraniuk$^{1,4}$ \\
\affaddr{$^{1}$Rice University, $^2$Northwestern University, $^3$University of Massachusetts Amherst, $^4$OpenStax} \\
{\fontsize{0.5}{2}\email{jzwang@rice.edu}, \email{Yi.Gu@u.northwestern.edu}, \email{andrewlan@cs.umass.edu}, \email{richb@rice.edu}}
%
%
}

\maketitle


\begin{abstract}
We propose VarFA, a variational inference factor analysis framework that extends existing factor analysis models for educational data mining to efficiently output uncertainty estimation in the model's estimated factors. Such uncertainty information is useful, for example, for an adaptive testing scenario, where additional tests can be administered if the model is not quite certain about a students' skill level estimation. Traditional Bayesian inference methods that produce such uncertainty information are computationally expensive and do not scale to large data sets. VarFA utilizes variational inference which makes it possible to efficiently perform Bayesian inference even on very large data sets. We use the sparse factor analysis model as a case study and demonstrate the efficacy of VarFA on both synthetic and real data sets. VarFA is also very general and can be applied to a wide array of factor analysis models. Code and instructions to reproduce results in this paper are available from~{\color{blue}{\url{https://tinyurl.com/tvm4332}}}.
\end{abstract}

%

\keywords{factor analysis, variational inference, learning analytics, educational data mining} 

\flushbottom
\section{Introduction}
\label{sec:intro}
A core task for many practical educational systems is \textit{student modeling}, i.e., estimating students' mastery level on a set of skills or knowledge components (KC)~\cite{vanlehn1988student, chrysafiadi2013student}. Such estimates allow in-depth understanding of students' learning status and form the foundation for automatic, intelligent learning interventions.
For example, intelligent tutoring systems (ITSs)~\cite{shute1994intelligent, pls1,anderson1995cognitive,nye2014autotutor,heffernan2014assistments,rafferty2015inferring} rely on knowing the students' skill levels in order to effectively recommend individualized learning curriculum and improve students' learning outcomes. In the big data era, student modeling is usually formulated as an educational data mining (EDM) problem~\cite{romero2010educational,baker2009state,merceron2005educational} where an underlying machine learning (ML) model estimates students' skill mastery levels from students' learning records, i.e., their answers to assessment questions. 

\sloppy
Many student modeling methods have been proposed in prior literature. A fruitful line of research for student modeling follows the \textit{factor analysis (FA)} approach.
FA models usually assume that an unknown, potentially multi-dimensional student parameter, in which each dimension is associated with a certain skill, explains how a student answers questions and is to be estimated. Popular and successful FA models include item response theory (IRT)~\cite{irt}, multi-dimensional IRT~\cite{mirt}, learning factor analysis (LFA)~\cite{LFA}, performance factor analysis (PFA)~\cite{PFA}, DASH (short for {d}ifficulty, {a}bility, and {s}tudent {h}istory)~\cite{lindsey2014improving,mozer2016predicting}, DAS3H (short for {d}ifficulty, {a}bility, {s}kill and {s}tudent {s}kill {h}istory)~\cite{das3h}, knowledge tracing machines (KTM)~\cite{vie2019knowledge}, and so on. 
Recently, more complex student models based on deep neural networks (DNN) have also been proposed~\cite{DKT,zhang2017dynamic,minn2018deep}. Nevertheless, thanks to their simplicity, effectiveness and robustness, FA models remain widely adopted and investigated for practical EDM tasks. Moreover, there is evidence that simple FA models could even outperform DNN models for student modeling in terms of predicting students' answers~\cite{irt-best}. Because of FA models' competitive performance and its elegant mathematical form, we focus on FA-based student models in this paper.

Most of the aforementioned FA models compute a single point estimate of skill levels for each student. Often, however, it is not enough to obtain mere point estimates of students' skill levels; knowing the model's uncertainty in its estimation is crucial because it potentially helps improve the model's performance and improve both students' and instructors' experience with educational systems. 
For example, in ITS, an underlying model can use the uncertainty information in its estimation of students' skill level to automatically decide that its recommendations based on highly uncertain estimations are unreliable and instead notify a human instructor to evaluate the students' performance. This enables collaboration between ITS and instructors to create a more effective learning environment.
%
In adaptive testing systems~\cite{chang2009nonlinear,yan2016computerized}, 
knowing the uncertainty in model's estimation could help the model intelligently pick the next test items to most effectively reduce its uncertainty about estimated students' skill levels. This will help to potentially reduce the number of items needed to have a confident, accurate estimation of the students' skill mastery level, saving time for both students to take the test and instructors to have a good assessment of the student's skills.
%
%

All the above applications require the model to ``know what it does not know,'' i.e., to quantify the uncertainty of its estimation. Achieving this does not necessary changes the model. rather, we need a different inference algorithm for inferring not only a point estimate of student's skill level from observed data (i.e., students' answer records to questions) but also uncertainty in the estimations. 

Fortunately, there exist methods that both compute point estimates of students' skill levels and quantifies the uncertainty of those estimates. These methods usually follow the Bayesian inference paradigm, where each student's unknown skill levels are treated as random variables drawn from a \textit{posterior distribution}. Thus, one can use the \textit{credible interval} to quantify the model's uncertainty on the student's estimated skill level. A classical method for Bayesian inference is Monte Carlo Markov Chain (MCMC) sampling~\cite{gelman2013bayesian} which has been widely used in many other disciplines~\cite{gilks1995markov} other than EDM. In the context of student modeling with FA models, existing works such as ~\cite{lan14a,fox2010bayesian,pardos2010using,merkle2011comparison} have applied MCMC methods to obtain credible intervals. 

Unfortunately, classic Bayesian inference methods suffer from extensive computational complexity. For example, each computation step in MCMC involves a time-consuming evaluation of the posterior distribution. Making matters worse, MCMC typically takes many more steps to converge than non-Bayesian inference methods. As a concrete illustration, in~\cite{lan14a}, the Bayesian inference method (10 minutes) is about 100 times slower than the other non-Bayesian inference method based on stochastic gradient descent (6 seconds). The high computational cost prevents Bayesian inference methods from mass-deployment in large-scale, real-time educational systems where timely feedback is critical for learning~\cite{driscoll1994psychology} and tens of thousands of data points need to be processed in seconds or less instead of minutes or hours.
It is thus highly desirable to accelerate Bayesian inference so that one can quantify model's uncertainty in its estimation as efficiently as non-Bayesian inference methods.

\paragraph{Contributions}
In this work, we propose VarFA, a novel framework based on \textit{variational inference} (VI) to perform efficient, scalable Bayesian inference for FA models. The key idea is to approximate the true posterior distribution, whose costly computation slows down Bayesian inference, with a \textit{variational distribution}. We will see in Section~\ref{sec:method} that, with this approximation, we turn Bayesian inference into an optimization problem where we can use the same efficient inference algorithms as in non-Bayesian inference methods. Moreover, the variational distribution is very flexible and we have full control specifying it, allowing us to freely use the latest development in machine learning, e.g., deep neural networks (DNNs), to design the variational distribution that closely approximates the true posterior. 
Thus, we also regard our work as a first step in applying DNNs to FA models for student modeling, achieving efficient Bayesian inference (enabled by DNNs) without losing interpretability (brough by FA models).
We demonstrate the efficacy of our framework on both synthetic and real data sets, showcasing that VarFA substantially accelerates classic Bayesian inference for FA models with no compromise on performance. 

The remainder of this paper is organized as follows. Section~\ref{sec:related-work} introduces the problem setup in FA and reviews the important FA models and their inference methods, in particular Bayesian inference. Section~\ref{sec:method} explains our VarFA framework in detail. Section~\ref{sec:experiments} presents extensive experimental results that substantiate the claimed advantages of our framework. Section~\ref{sec:conclusion} concludes this paper and discusses possible extensions to VarFA.

\section{Background and Related Work}
\label{sec:related-work}
We first set up the problem and review related work. Assume we have a data set $\mY\in\mathbb{R}^{N\times Q}$ organized in matrix format where $N$ is the total number of students and $Q$ is the number of questions. This is a binary students' answer record matrix where each entry $y_{ij}$ represents whether student $i$ correctly answered question $j$. Usually, not all students answer all questions. Thus, $\mY$ contains missing values. We use $\{i,j\}\in\Omega_{\rm obs}$ to denote entries in $\mY$, i.e., the $i$-th student's answer record to the $j$-th question, that are observed. 


We are interested in models capable of inferring each $i$-th student's skill mastery level that can accurately predict the student's answers given the above data. These models are often evaluated on the prediction accuracy and whether the inferred student skill mastery levels are easily interpretable and educationally meaningful. We now review factor analysis models (FA), one of the most widely adopted and successful methodologies for the student modeling task.

\subsection{Factor Analysis For Student Modeling}
\label{sec:fa}
One of the earliest FA model for student modeling is based on the item response theory (IRT)~\cite{irt}. It usually has the following form
\begin{align}
    \begin{split}
        \mathbb{P}(y_{ij}=1) = \sigmoid(c_i + \mu_j)\,,\label{eq:irt}
    \end{split}
\end{align}
which assumes that each student's answer $y_{ij}$ is independently Bernoulli distributed. The above formula says that the students' answers can be explained by an unknown scalar student skill level factor $c_i$ for each student $i$ and an unknown scalar question difficulty level factor $\mu_j$ for each question $j$. $\sigmoid(\cdot)$ is a Sigmoid activation function, i.e., $\sigmoid(x) = 1 / (1 + {\rm exp}(-x))$. The multi-dimensional IRT model (MIRT)~\cite{mirt} extends IRT by using a multi-dimensional vector to represent the student skill levels. More recently,~\cite{lan14a} proposed sparse factor analysis model (SPARFA) which extends MIRT by imposing additional assumptions, resulting in improved interpretations of the inferred factors. 

Other FA models seek to improve student modeling performance by cleverly incorporate additional auxiliary information. 
For example, the additive factor model (AFM)~\cite{LFA} incorporates students' accumulative correct answers for a question and the skill tags associated with each question:
\begin{align}
    \begin{split}
        \mathbb{P}(y_{ij}=1) = \sigmoid\left(c_i + \sum_{k=1}^\kappa\big( q_{kj}\beta_k + q_{kj}\rho_k b_{ik} \big)\right)\,,\label{eq:laf}
    \end{split}
\end{align}
where $\kappa$ is the total number of skills in the data, $b_{ik}$ is the $i$-th student's total number of correct answers for skill $k$ and $q_{kj}\in\{0,1\}$ indicates whether the skill $k$ is associated with question $j$. In particular, $q_{kj}$'s form matrix $\mQ\in\mathbb{R}^{K\times Q}$ which is commonly known as the Q-matrix in literature~\cite{qmatrix}. The unknown, to-be-inferred factors are $\beta_k$, a scalar difficulty factor for each skill $k$, and $\rho_k$, a scalar learning rate of skill $k$.
The performance factor model (PFM)~\cite{PFA} builds upon AFM that additionally incorporate the total number of a student's incorrect answers to questions associated with a skill. The instructor factor model (IFM)~\cite{chi2011instructional} further builds on PFM to incorporate the prior knowledge of whether a student has already mastered a skill. More recently, ~\cite{vie2019knowledge} introduces knowledge tracing machines (KTM) that could flexibly incorporate a number of auxiliary information mentioned above, thus generalizing AFM, PFM and IFM. 

Another way to improve student modeling performance is to utilize the so-called \textit{memory}, i.e., using students' historic action data over time. For example, ~\cite{lindsey2014improving,mozer2016predicting} proposed DASH (short for \underline{d}ifficulty, \underline{a}bility, and \underline{s}tudent \underline{h}istory) that incorporates a student's total number of correct answers and total number of attempts for a question in a given time window
\begin{align}
    \begin{split}
        \mathbb{P}(y_{ij}=1) &= \sigmoid\Bigg( c_i - \mu_j + \sum_{w=0}^{W-1} \Big(\theta_{2w+1}{\rm log}(1+\rho_{ijw}) \\[-5pt]
        &\qquad\qquad\qquad\qquad- \theta_{2w+2}{\rm log}(1+\gamma_{ijw})\Big) \Bigg)
    \end{split}\label{eq:dash}
\end{align}
where $W$ is the length of the time windows (e.g., number of days), $\rho_{ijw}$ and $\gamma_{ijw}$ are the $i$-th student's total number of correct answers and total number of attempts for question $j$ at time $w$, respectively. $\theta$'s are parameters that captures the effect of correct answers and total attempts. More recently, DAS3H~\cite{das3h} extends DASH to further include skill tags associated with each question which essentially amounts to adding, within the last summation term in Eq.~\ref{eq:dash}, another summation over the number of skills.

\subsubsection{A general formulation for FA models}
Although the above FA models differ in their formulae, modeling assumptions and the available auxiliary data used, we argue that the aforementioned FA models can be unified into a canonical formulation below 
\begin{align}
    \begin{split}
        \mathbb{P}(y_{ij}=1) = \sigmoid(\rvc_i^\top \rvm_j + \mu_j)\,, \label{eq:sparfa}
    \end{split}
\end{align}
where $\rvc_i\in\mathbb{R}^K$, $\rvm_j\in\mathbb{R}^K$ and $\mu_j\in\mathbb{R}$ are factors whose dimension, interpretations and subscript indices depend on the specific instantiations of the FA model. To illustrate that Eq.~\ref{eq:sparfa} subsumes the FA models mentioned above, we demonstrate, as examples, how to turn SPARFA and AFM into the form in Eq.~\ref{eq:sparfa} and how to reinterpret the parameters in the reformulation. The equivalence between Eq.~\ref{eq:sparfa} and the original SPARFA formula is immediate and we can interpret the parameters in Eq.~\ref{eq:sparfa} as follows to recover SPARFA: $K$ is the number of \textit{latent skills} that group the actual skill tags in the data set into meaningful coarse clusters; $\rvc_i$ represents the $i$-th student's skill level on the latent skills; $\rvm_j$ represents the strength of association of the $j$-th question with the latent skills which is assumed to be nonnegative and sparse for improved interpretation; and $\mu_j$ represents the difficulty of question $j$. All three factors in SPARFA are assumed to be unknown.

Similarly, for AFM, we can perform the following change of variables and indices to obtain Eq.~\ref{eq:sparfa}. We first change the indices $\rvm_j$ to $\rvm_{ij}$ and $\mu_j$ to $\mu_i$, and remove the index in $\rvc_i$ to $\rvc$. Then, we simply set $\rvc = [\beta_1, ..., \beta_\kappa, \rho_1, ..., \rho_\kappa]^\top$, $\rvm_{ij} = [q_{1j},...,q_{\kappa j}, q_{1j}b_{ik},...,q_{\kappa j}b_{i\kappa}]^\top$ and $\mu_{i} = \alpha_i$ to obtain Eq.~\ref{eq:sparfa}. We can interpret the parameters in the reformulation as follows to recover AFM: $\kappa=K/2$ is the total number of skill tags in the data; $\rvc$ represents the unknown skill information including skill difficulty and learning rate; $\rvm_{ij}$ summarizes known auxiliary information for the $i$-th student and $j$-th question; $\mu_i$ represents the unknown $i$-th student's skill mastery level. Tthe other FA models can be cast into Eq.~\ref{eq:sparfa} in a similar fashion. Note that, if the observed data $\mY$ is not binary but rather continuous or categorical, we can simply use a Gaussian or categorical distribution for the observed data and change the Sigmoid activation $\sigmoid(\cdot)$ to some other activation functions accordingly. In this way, we still retain the general FA model in Eq.~\ref{eq:sparfa}. We will use this canonical form to facilitate discussions in the rest of this paper.

\subsection{Inference Methods for FA Models}
\label{sec:FA-inference}
We first introduce the inference objective and briefly review maximum log likelihood estimation method. Then, we review existing works that uses Bayesian inference for FA and highlight their high computational complexity, paving the way to VarFA, our proposed efficient Bayesian inference framework based on variational inference.

The factors in Eq.~\ref{eq:sparfa} may contain known factors that need no inference. Therefore, for convenience of notation, let $[\nu, \theta, \psi]$ be a partition of the factors $\rvc_i$, $\rvm_j$ and $\mu_j$ where $\nu$ contains the unknown students' skill level factors to be estimated, $\theta$ contains the remaining unknown factors and $\psi$ contains the known factors. For example, for AFM, $\nu = \{\mu_1,...,\mu_N \}$, $\theta=\rvc$ and $\psi = \{\rvm_{11}, ..., \rvm_{NQ}\}$. For SPARFA, $\nu = \{\rvc_1,...,\rvc_N\}$, $\theta = \{\rvm_1, ...,\rvm_Q, \mu_1,...,\mu_Q\}$ and $\psi = \emptyset$. Further, let the subscripts for these partitions indicate the corresponding latent factors in FA; i.e., for SPARFA, $\theta_i = [\rvm_i, \mu_i]$ and $\nu_i = \rvc_i$. Since $\psi$ summarizes known factors, we will omit it from the mathematical expositions in the remainder of the paper.

The inference objective in FA is then to obtain a good estimate of the unknown factors, represented by $\theta$ and $\nu$, with respect to some loss function $\mathcal{L}$, usually the marginal data log likelihood. 
There are two ways to infer the unknown factors. 

\subsubsection{Maximum Likelihood Estimation}
The first way is through the maximum likelihood estimate (MLE) which obtains a point estimate of the unknown factors
\begin{align}
    \widehat{\theta},\widehat{\nu} 
    &= \underset{\theta,\nu}{\rm argmin}\,\,\bigg(-\hspace{-5pt}\sum_{i,j\in\Omega_{\rm obs}}\hspace{-5pt} {\rm log}\,p(y_{ij} ; \theta,\nu)\bigg) +  \lambda \mathcal{R}(\theta,\nu) \,.\label{eq:mle}
\end{align}
Recall that $\Omega_{\rm obs}$ indicates which student $i$ has provided an answer to question $j$. $\mathcal{R}(\theta,\nu)$ is a regularization term, e.g., $\ell2$ regularization on the factors and $\lambda$ is a hyper-parameter that controls the strength of the regularization. Most of the works in FA use this method of inference~\cite{LFA,PFA,chi2011instructional,vie2019knowledge,das3h,lindsey2014improving,mozer2016predicting}. The advantage of MLE is that the above optimization objective allows the use of fast inference algorithms, in particular stochastic gradient descent (SGD) and its many variants, making the inference highly efficient.

\subsubsection{Maximum A Posteriori Estimation}
\label{sec:bayes}
The second way is through maximum a posteriori (MAP) estimation through Bayesian inference. This is the focus of our paper. Recall that we wish to obtain not only a point estimate but also credible interval, which MAP allows while MLE does not. 

Bayesian inference methods treat the unknown parameters $\nu$ and $\theta$ as random variables drawn from some distribution. We are thus interested in inferring the \textit{posterior} distribution of $\nu$ and $\theta$. Using the Bayes rule, we have that 
\begin{align}
    p(\nu | \mY, \theta)  &= \frac{p(\mY, \nu, \theta)}{p(\mY)} \label{eq: bayes-rule-1}\\
    &= \frac{p(\mY | \nu, \theta)\,p(\nu)\,p(\theta)}{\iint p(\mY | \nu, \theta)\,p(\nu)\,p(\theta) d\nu d\theta}. \label{eq:bayes-rule-2}
\end{align}
We can similarly derive the posterior for the other unknown factor $\theta$, although in this paper we focus on Bayesian inference for the unknown student skill level parameter $\nu$.
In Eq.~\ref{eq: bayes-rule-1} to Eq.~\ref{eq:bayes-rule-2} we have applied standard Bayes rule. Note that, because $\psi$ contains known factors and thus does not enter the Bayes rule equation, we have omitted it from the above equations. Once we estimate the posterior distribution for $\nu$ from data, we can obtain both point estimates and credible intervals by respectively taking the mean and the standard deviation of the posterior distributions. A number of prior works have proposed to use Bayesian inference for factor analysis, mostly developed for the IRT model~\cite{fox2001bayesian,mislevy1986bayes}. More recently, Bayesian methods are developed for more complex models. For example,~\cite{lan14a} proposed SPARFA-B, a specialized MCMC algorithm that accounts for SPARFA's sparsity and nonnegativity assumptions.

However, Bayesian inference suffers from high computational cost despite its desired capability to quantify uncertainty. The challenge stems from the difficulty to evaluate the denominator in the posterior distribution in Eq.~\ref{eq: bayes-rule-1} and~\ref{eq:bayes-rule-2} which involves an integration over a potentially multi-dimensional variable. This term usually cannot be computed in close form, thus we usually cannot get an analytical formula for the posterior distribution. Monte Carlo Markov Chain (MCMC) method gets around this difficulty by using analytically tractable proposal distributions to approximate the true posterior and sequentially updating the proposal distribution in a manner reminiscent of gradient descent. MCMC methods also enjoy the theoretical advantage that, when left running for long enough, the proposal distribution is guaranteed to converge to the true posterior distribution~\cite{gelman2013bayesian}. However, in practice, it might take very long time for MCMC to converge to the point that running MCMC is no longer practical when data set is very large. The high computational complexity of MCMC is apparently impractical for educational applications where timely feedback is of critical importance~\cite{driscoll1994psychology}. 




\section{V{\normalfont\textbf{ar}}FA: A Variational Inference Factor Analysis Framework}
\label{sec:method}
We are ready to introduce VarFA, our variational inference (VI) factor analysis framework for efficient Bayesian learning analytics. The core idea follows the variational principle, i.e., we use a parametric variational distribution to approximate the true posterior distribution. VarFA is highly flexible and efficient, making it suitable for large scale Bayesian inference for FA models in the context of educational data mining. 

In this current work, we focus on obtaining credible interval for the student skill mastery factor $\nu$ as a first step of VarFA because, recall from Section~\ref{sec:intro}, this factor is often of more practical interest than the other unknown factors. Therefore, currently VarFA is a hybrid MAP and MLE inference method: we perform VI on the unknown student factor $\nu$ and MLE on all other unknown factors $\theta$. We consider this hybrid nature of VarFA in its current formulation as a novel feature because classic MLE or MAP estimation are not capable of performing Bayesian inference on a subset of the unknown factors.
Extension to VarFA to full Bayesian inference for all unknown factors is part of an ongoing research; see~\ref{sec:conclusion} for more discussions.


\subsection{VarFA Details}
\label{sec:method-details}
Now, we explain in detail how to apply variational inference for FA models for efficient Bayesian inference.
Because the posterior distribution is intractable to compute (recall Eq.~\ref{eq: bayes-rule-1} and~\ref{eq:bayes-rule-2} and the related discussion), we approximate the true posterior distribution for $\nu$ with a parametric variational distribution
\begin{align}
    p(\nu | \mY, \theta) \approx q_{\phi}(\nu | \mY) = \prod_{i=1}^N q_{\phi}(\nu_i | \vy_i)  \,,
\end{align}
where $\phi$ is a collection of learnable parameters that parametrize the variational distribution and $\vy_i$ is all the answer records by student $i$. Notably, we have removed the dependency of the variational distribution on $\psi$ and $\theta$ so that the variational distribution is solely controlled by the variational parameter $\phi$. Thus, the design of the variational distribution is highly flexible. All we need to do is to specify a class of distributions and design a function parametrized by $\phi$ to output the parameters of $q_\phi$. 
Common in prior literature is to use a Gaussian with diagonal covariance for $q_\phi$:
\begin{align}
    q_\phi(\nu | \vy_i) = \mathcal{N}(\vu_j, {\rm diag}(\vv_j))\,,\label{eq:variational-distribution}
\end{align}
where its mean and variance $[\vu_j^\top, \vv_j^\top]^\top = f_\phi(\vy_i)$. We can use arbitrarily complex functions such as a deep neural network for $f_\phi$ as long as they are differentiable; more details in Section~\ref{sec:implementation-detail}.
With the above approximation, Bayesian inference turns into an optimization problem under the variational principle, where we now optimize a lower bound, known as the evidence lower bound (ELBO)~\cite{bishop2006pattern}, of the marginal data log likelihood. We derive a novel ELBO objective for variational inference applied to FA models in the EDM setting, summarized in the proposition below.
\begin{prop} In VarFA, the ELBO objective for an FA model in the form of Eq.~\ref{eq:sparfa} is 
\begin{align}
\begin{split}
        \mathcal{L}_{\rm ELBO}(\phi, \theta) =& \sum_{i,j\in\Omega_{\rm obs}} \bigg(-D_{\rm KL}[q_\phi(\nu_i|\vy_i)\|p(\nu_i)] \\
    &\quad + \mathbb{E}_{\nu_i\sim q_\phi(\nu_i|\vy_i)}[{\rm log}\,p_{\theta_j}(y_{ij}|\nu_i)]\bigg)
\end{split}
\label{eq:elbo}
\end{align}
where $D_{\rm KL}$ is the Kullback–Leibler (KL) divergence~\cite{mackay2003information} between two distributions.
\end{prop}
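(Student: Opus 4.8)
The plan is to obtain the stated expression as the standard evidence lower bound on the marginal data log likelihood, specialized to the factorization structure of the FA model in Eq.~\ref{eq:sparfa}. Since VarFA treats the student factor $\nu$ as a latent variable to be integrated out while keeping $\theta$ as an ordinary parameter (the hybrid VI/MLE setup described above), the quantity I would lower-bound is $\log p(\mY;\theta)=\log\int p(\mY\mid\nu;\theta)\,p(\nu)\,d\nu$. First I would introduce the variational distribution $q_\phi(\nu\mid\mY)$ and use the exact decomposition
\begin{equation}
\log p(\mY;\theta)=\mathbb{E}_{q_\phi(\nu\mid\mY)}\!\left[\log\frac{p(\mY\mid\nu;\theta)\,p(\nu)}{q_\phi(\nu\mid\mY)}\right]+D_{\rm KL}\!\left[q_\phi(\nu\mid\mY)\,\big\|\,p(\nu\mid\mY,\theta)\right],
\end{equation}
which follows by writing $p(\mY\mid\nu;\theta)\,p(\nu)=p(\nu\mid\mY,\theta)\,p(\mY;\theta)$ inside the logarithm and splitting terms. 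Because the posterior KL term is nonnegative, the first term is a lower bound, and defining $\mathcal{L}_{\rm ELBO}(\phi,\theta)$ to be that first term gives the ELBO. (Equivalently one may apply Jensen's inequality directly to $\log\int(\cdot)$.)

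Next I would split the ELBO into a reconstruction term and a prior-matching term, $\mathcal{L}_{\rm ELBO}=\mathbb{E}_{q_\phi}[\log p(\mY\mid\nu;\theta)]-D_{\rm KL}[q_\phi(\nu\mid\mY)\,\|\,p(\nu)]$, and then push the sums through using three structural facts: (i) the variational posterior is mean-field across students, $q_\phi(\nu\mid\mY)=\prod_i q_\phi(\nu_i\mid\vy_i)$; (ii) the prior factorizes, $p(\nu)=\prod_i p(\nu_i)$; and (iii) conditional on $\nu_i$ the answers are independent Bernoulli over observed entries, so $p(\mY\mid\nu;\theta)=\prod_{i,j\in\Omega_{\rm obs}}p_{\theta_j}(y_{ij}\mid\nu_i)$. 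Fact (iii) turns the reconstruction term into $\sum_{i,j\in\Omega_{\rm obs}}\mathbb{E}_{\nu_i\sim q_\phi(\nu_i\mid\vy_i)}[\log p_{\theta_j}(y_{ij}\mid\nu_i)]$; the key simplification is that each summand depends on $\nu$ only through $\nu_i$, so by mean-field the full high-dimensional expectation collapses to a per-student expectation under the marginal $q_\phi(\nu_i\mid\vy_i)$. Facts (i) and (ii) make the KL additive across students, $D_{\rm KL}[q_\phi(\nu\mid\mY)\|p(\nu)]=\sum_i D_{\rm KL}[q_\phi(\nu_i\mid\vy_i)\|p(\nu_i)]$, recovering the per-student penalties.

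The main obstacle is bookkeeping rather than analysis: reconciling the per-student KL sum $\sum_i$ with the per-observation index set $\Omega_{\rm obs}$ that governs the reconstruction term in the displayed statement. I would make explicit how the additive KL is associated with the observation sum by grouping the reconstruction terms by student, so that each student $i$ contributes $-D_{\rm KL}[q_\phi(\nu_i\mid\vy_i)\|p(\nu_i)]+\sum_{j:(i,j)\in\Omega_{\rm obs}}\mathbb{E}[\log p_{\theta_j}(y_{ij}\mid\nu_i)]$, and state the convention used to distribute the KL across the $(i,j)$ summation in Eq.~\ref{eq:elbo}. The only step worth verifying with care is (iii), the conditional independence of the entries of $\mY$ given the latent factors, which is exactly the Bernoulli product form assumed in Eq.~\ref{eq:sparfa}; everything else is the classical ELBO manipulation together with the mean-field collapse of the expectations.
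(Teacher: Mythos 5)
Your derivation is the standard global ELBO (Jensen applied once to $\log p(\mY;\theta)$ with a mean-field $q_\phi$), and it is internally correct as far as it goes --- but it does not yield the proposition as stated, and the discrepancy you flag at the end is not mere bookkeeping. Your route gives the KL penalty once per \emph{student}, $\sum_i D_{\rm KL}[q_\phi(\nu_i|\vy_i)\|p(\nu_i)]$, whereas Eq.~\ref{eq:elbo} places a KL term inside the sum over $\Omega_{\rm obs}$, so each student's KL is counted once per \emph{observed entry}, i.e.\ with weight $n_i = |\{j : (i,j)\in\Omega_{\rm obs}\}|$. Whenever any student answers more than one question the two objectives are numerically different (yours is the larger, hence tighter, bound), and no ``convention for distributing the KL across the $(i,j)$ summation'' can make $\sum_i D_{\rm KL}$ equal to $\sum_{i,j\in\Omega_{\rm obs}} D_{\rm KL}$; grouping the reconstruction terms by student only makes the extra factor $n_i$ explicit.

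The paper's proof takes a genuinely different route that produces the per-observation form directly: it first writes the marginal log-likelihood as a sum of per-entry marginals, $\log p(\mY)=\sum_{i,j\in\Omega_{\rm obs}}\log p_{\theta_j}(y_{ij})$, then introduces the latent $\nu_i$ and the variational factor $q_\phi(\nu_i|\vy_i)$ inside each term separately, and applies Jensen's inequality once \emph{per observed entry}; each $(i,j)$ term therefore carries its own KL, which is exactly Eq.~\ref{eq:elbo}. (That opening per-entry factorization is itself a modeling choice --- under the joint model, entries in the same row share $\nu_i$ and are not marginally independent --- and it is precisely where the per-observation KL weighting originates.) To prove the proposition as stated you would need to adopt that per-entry decomposition and run Jensen entry by entry; alternatively, your argument proves a correct but different statement in which the KL is summed over students rather than over $\Omega_{\rm obs}$.
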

\begin{proof} 
We start with the marginal data log likelihood which is the objective we want to maximize and introduce the random variables $\nu_i$'s: 
\begin{align} 
    {\rm log}\,p(\mY) &= \sum_{i,j\in\Omega_{\rm obs}} {\rm log}\, p_{\theta_j}(y_{ij}) \nonumber \\ 
    &= \sum_{i,j\in\Omega_{\rm obs}} {\rm log}\int p_{\theta_j}(y_{ij}, \nu_j) \,d\nu_i \nonumber \\
    &= \sum_{i,j\in\Omega_{\rm obs}} {\rm log}\int q_\phi(\nu_i|\vy_i) \frac{p_{\theta_j}(y_{ij}, \nu_j)}{q_\phi(\nu_i|\vy_i)}\,d\nu_i \nonumber \\
    &\overset{(i)}{\geq} \sum_{i,j\in\Omega_{\rm obs}}\mathbb{E}_{\nu_i\sim q_\phi(\nu_i|\vy_i)}\left[ {\rm log}\frac{p_{\theta_j}(y_{ij}, \nu_j)}{q_\phi(\nu_i|\vy_i)}\right] \nonumber\\
    &\geq \sum_{i,j\in\Omega_{\rm obs}} \hspace{-5pt}- \mathbb{E}_{\nu_i}\left[{\rm log} \frac{q_\phi(\nu_i|\vy_i)}{p(\nu_j)}\right] + \mathbb{E}_{\nu_i}\left[ {\rm log}\,p_{\theta_j}(y_{ij}|\nu_j)\right] \nonumber\\
    &= \mathcal{L}_{\rm ELBO}(\phi,\theta)  \nonumber
\end{align}
where step (i) follows from Jensen's inequality~\cite{jensen1906fonctions}. \qed
\end{proof}

The ELBO objective differs from those used in existing literature in that, in our case, because the data matrix is only partially observed, the summation is only over $\{i,j\}\in\Omega_{\rm obs}$, i.e., the observed entries in the data matrix. In contrast, in prior literature, the summation in the ELBO objective is over all all entries in the data matrix.

We form the following optimization objective to estimate $\phi$ and $\theta$:
\begin{align}
    \widehat{\theta}, \widehat{\phi} = \underset{\theta, \phi}{\rm argmin}\, -\mathcal{L}_{\rm ELBO}(\phi, \theta) + \lambda \mathcal{R}(\theta)\,,\label{eq:objective}
\end{align}
where $\mathcal{R}(\theta)$ is a regularization term.
That is, we perform VI on the student factor $\nu$ and MLE inference on the remaining factors $\theta$. More explicitly, to perform VI on $\nu$, we simply compute the mean and standard deviation of $q_\phi$ using $f_\phi$ with the learnt variational parameters $\phi$. 

\subsection{Why is VarFA efficient?}
\label{sec:implementation-detail}
\vspace{-10pt}
\paragraph{VarFA supports efficient stochastic optimization}
By formulating Bayesian inference as an optimization problem via the ELBO objective, VarFA allows efficient optimization algorithms such as SGD, with a few additional tricks. In particular, we require all terms in $\mathcal{L}(\theta,\phi)$ to be differentiable with respect to $\theta$ and $\phi$ which enable gradient computation necessary for SGD. This requirement is easily satisfied with a few moderate assumptions. Specifically, we let the variational distribution $q_\phi$ and the prior distribution $p(\nu_i)$ for each $i$ to be Gaussian (See Eq.~\ref{eq:variational-distribution}; we use a standard Gaussian for the prior distribution $p(\nu_i)$, i.e., $p(\nu_i) = \mathcal{N}(0, \mI)$) following existing literature~\cite{kingma2013auto,rezende2014stochastic}. These two assumptions are not particularly limiting especially given the vast number of successful applications of VI that rely on the same assumptions~\cite{infovae,dd,siddharth2017learning,fairvae,betatcvae,betavae,factorvae}.
Thanks to the Gaussian assumption, for the first term in $\mathcal{L}(\theta,\phi)$, we can easily compute the KL divergence term analytically in closed form. For the second term in $\mathcal{L}(\theta,\phi)$, we can use the so-called reparametrization trick, i.e., $\nu_i = \vu_i + \vv_i \odot \bm{\epsilon}$, where $\bm{\epsilon}\sim\mathcal{N}(\bm{0}, \mI)$, which allows a low variance estimation of the gradient of the second term in $\mathcal{L}(\theta,\phi)$ with respect to $\phi$. See Section 2.3 in~\cite{kingma2013auto}, Section 2.3 in~\cite{kingma2019introduction} and Section 3 in~\cite{rezende2014stochastic} for more details on stochastic gradient computation in VI. As a result, our framework can be efficiently implemented using a number of open source, automatic differentiation packages such as Tensorflow~\cite{tensorflow2015-whitepaper} and PyTorch~\cite{NEURIPS2019_9015}.

\vspace{-10pt}
\paragraph{VarFA supports amortized inference} Classic Bayesian inference methods such as MCMC infer each parameter $\nu_i$ for each student. As the number of students increases, the number of parameters that MCMC needs to infer also increases, which may not be scalable in large-scale data settings. In contrast, unlike classic Bayesian inference methods, VI is \textit{amortized}. It does \textit{not} infer each parameter $\nu_i$. Rather, it estimates a \textit{single} set of variational parameter $\phi$ responsible for inferring \textit{all} $\nu_i$'s. In this way, once we have trained FA models using VarFA and obtain the variational parameter $\phi$, we can easily infer $\nu_i$ by simply computing $q_\phi$, even for new students, without invoking any additional optimization or inference procedures. 

\subsection{Dealing with missing entries} Note from Eq.~\ref{eq:elbo} that the variational distribution $q_\phi$ for each $\nu_i$ is conditioned on $\vy_i\in\mathbb{R}^Q$, i.e., an entire row in $\mY$. However, in practice, the data matrix $\mY$ is often only partially observed, i.e., some students only answer a subset of questions. Then, $\vy_i$'s will likely contain missing values which computation cannot be performed on. To work around this issue, we use ``zero imputation'', a simple strategy that transforms the missing values to 0, following prior work on applying VI in the context of recommender systems~\cite{liang2018variational,salakhutdinov2007restricted,sedhain2015autorec,nazabal2018handling} that demonstrated the effectiveness of this strategy despite its simplicity. We note that there exist more elaborate ways to deal with missing entries, such as designing specialized function $f_\phi$ for the variational distribution~\cite{ma2018eddi,gong2019icebreaker,ma2018partial}. We leave the investigation of more effectively dealing with missing entries to future work.

\begin{figure*}[htp!]
    \centering
    \includegraphics[width=0.32\linewidth]{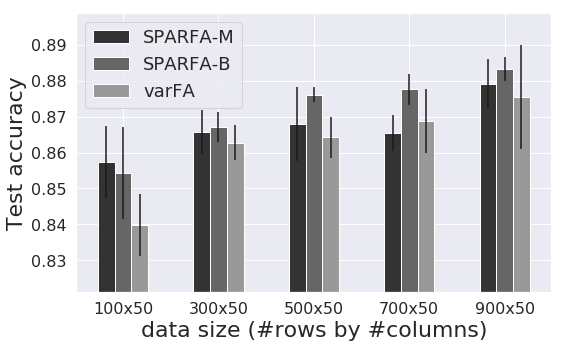}
    \hspace{3pt}
    \includegraphics[width=0.32\linewidth]{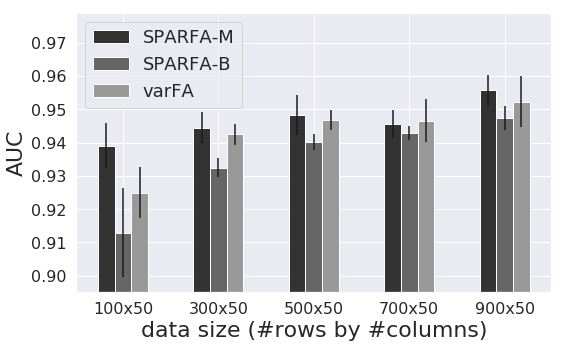}
    \hspace{3pt}
    \includegraphics[width=0.32\linewidth]{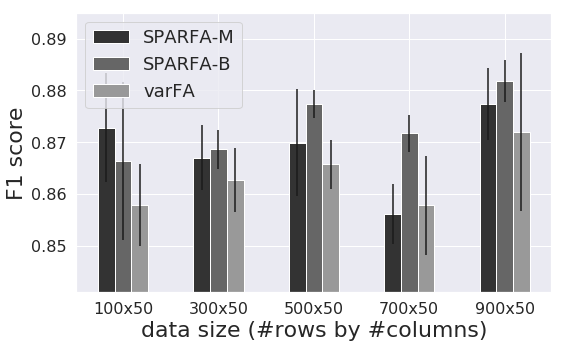}
    \caption{Performance of the  SPARFA-M, SPARFA-B, and VarFA algorithms on the synthetic data set with different data sizes. Plots from left to right show comparison on accuracy (ACC), area under curve (AUC) and F1 metrics, respectively. Higher is better for all metrics. VarFA  performs
    similarly to SPARFA-M and SPARFA-B.}
    \label{fig:synthetic}
\end{figure*}

\subsection{Remarks}
\vspace{-10pt}
\paragraph{Applicability of VarFA} The VarFA  framework is general and flexible and can be applied to a wide array of FA models. The recipe for applying VarFA to an FA model of choice is as follows: 1) formulate the FA model into the canonical formulation as in Eq.~\ref{eq:sparfa}; 2) partition the factors into student factor $\nu$, the remaining unknown factors $\theta$ and known factors $\psi$; 3) perform VI on $\nu$ and MLE estimation on $\theta$, following Section.~\ref{sec:method-details}.

\vspace{-10pt}
\paragraph{Relation to variational auto-encoders}Our proposed framework can be regarded as a standard variational auto-encoder (VAE) but with the decoder implemented not as a neural network but by the FA model. Because the decoder is constraint to a FA model, it is more interpretable than a neural network. 

\vspace{-10pt}
\paragraph{Relation to other efficient Bayesian factor analysis methods}
We acknowledge that VI applied to general FA models have been proposed in existing literature~\cite{ghahramani2000variational,zhao2009note}. However, to our knowledge, little prior work have applied VI to educational FA models nor investigated its effectiveness. One concurrent work applied VI to IRT~\cite{wu2020variational}. Our VarFA framework applies generally to a number of other FA models by following the recipe in the preceding paragraph, including IRT. Thus, our work complements existing literature in providing promising results in applying VI for FA models in the context of educational data mining.

\section{Experiments}
\label{sec:experiments}
We demonstrate the efficacy of VarFA variational inference framework using the sparse factor analysis model (SPARFA) as the underlying FA model. 
This choice of SPARFA as the FA model to investigate is motivated by its mathematical generality: it assumes no auxiliary information is available and all three factors $\rvc_i$'s, $\rvm_j$'s and $\mu_j$'s need to be estimated, which makes the inference problem more challenging.~\cite{lan14a} provides two inference algorithms including SPARFA-M (based on MLE) and SPARFA-B (based on MAP). 

We conduct experiments on both synthetic and real data sets. Using synthetic data sets, we compare VarFA to both SPARFA-M and SPARFA-B. We demonstrate that 1) VarFA predicts students' answers as accurately as SPARFA-M and SPARFA-B; 2) VarFA is almost 100$\times$ faster than SPARFA-B. Using real data sets, we compare VarFA to SPARFA-M. We demonstrate that 1) VarFA predicts students' answers more accurately than SPARFA-M; 2) VarFA can output the same insights as SPARFA-M, including point estimate of students' skill levels and questions' associations with skill tags; 
3) VarFA can additionally output meaningful uncertainty quantification for student skill levels, which SPARFA-M is incapable of, without sacrifice to computational efficiency. Note that SPARFA-B can also compute uncertainty for small data sets but fails for large data sets due to scalability issues and thus we do not compare to SPARFA-B for real data sets.

Specifically for SPARFA, using the notation convention in Section~\ref{sec:FA-inference}, the question-skill association factors $\rvm_j$'s and the question difficulty factors $\mu_j$ are collected in $\theta = \{\rvm_1,...,\rvm_Q,\mu_1,...,\mu_Q\}$. The student skill level factors $\rvc_j$'s are collected in $\nu = \{\rvc_1,...,\rvc_N\}$. Because the factors $\rvm_j$'s are unknown, the ``skills'' in SPARFA are \textit{latent} (referred to as ``latent skills'' in subsequent discussions) and are not attached to any specific interpretation. However, as we will see in Section~\ref{sec:experiment-post-processing}, assuming the skill tags are available as auxiliary information, we can associate the estimated latent skills with the provided skills tags using the same approach proposed in~\cite{lan14a}. For the regularization term in Eq.~\ref{eq:objective}, because the factors $\rvm_j$'s are assumed to be sparse and nonnegative, we use $\ell1$ regularization for $\rvm_j$'s. When performing MLE for $\rvm_j$'s, we use the proximal gradient algorithm for optimization problem with nonnegative and sparsity requirements; see Section 3.2 in~\cite{lan14a} for more details. For the other unknown factor $\mu_j$'s in $\theta$, we apply standard $\ell2$ regularization.
The code along with instructions to reproduce our experiments can be downloaded from~{\color{blue}{\url{https://tinyurl.com/tvm4332}}}.

\begin{figure}
    \centering
    \includegraphics[width=0.9\linewidth]{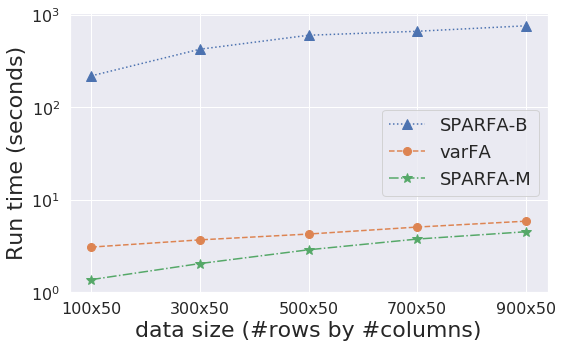}
    \caption{Training run time comparing VarFA, SPARFA-M, and SPARFA-B on synthetic data sets of varying sizes. VarFA performs approximate Bayesian inference almost 100x faster than SPARFA-B and is close to the run time of SPARFA-M. Thus, VarFA enables practical and scalable Bayesian inference for very large data sets.}
    \label{fig:run-time}
\end{figure}
\subsection{Synthetic Data Experiments}
\subsubsection{Setup}
\vspace{-10pt}
\paragraph{Data set generation} We generate data set according to the canonical FA model specified in Eq.~\ref{eq:sparfa}, taking into consideration the additional sparsity and nonnegativity assumptions in SPARFA. Specifically, We sample the factors $\mu_j$'s and $\rvc_i$'s from i.i.d. standard isotropic Gaussian distributions with variance $1$ (or identity matrix for $\rvc_i$'s) and mean sampled from a uniform distribution in range $[-1, 1]$. To simulate sparse and nonnegative factors $\rvm_j$'s, we first sample an auxiliary variable $s_{kj}\stackrel{\rm\tiny i.i.d.}{\sim}{\rm Bernoulli}(\pi)$, where $\pi$ controls the sparsity, and then sample $m_{kj}\stackrel{\rm\tiny i.i.d.}{\sim}{\rm Exponential}(1)$ when $s_{kj}=1$ or setting $m_{kj}=0$ when $s_{kj}=0$ for each $j$ and $k$. In all synthetic data experiments, we use $\pi=0.3$ and set the true number of latent skills to $K=5$. 

\vspace{-10pt}
\paragraph{Experimental settings} 
We vary the size of the data set and use 5 different data matrix sizes: 100$\times$50, 300$\times$50, 500$\times$50, 700$\times$50 and 900$\times$50. We select a data missing rate of 50\%, i.e., we randomly choose 50\% of all entries in the data matrix as training set and the rest as test set. 
Experimental results for each of the above data sizes are averaged over 5 runs where we randomize over the train/test data split. 
We train SPARFA with both SPARFA-M and VarFA using the Adam optimizer~\cite{kingma2014adam} with learning rate $=0.05$ for 100 epochs. Regularization hyper-parameters are chosen by grid search for each experiment. For VarFA, we use a simple 3-layer neural network for the function $f_\phi$ in the variational distribution. 
We use the hyper-parameter settings for SPARFA-B following Section 4.2 in~\cite{lan14a}. 


\begin{table}[]
\caption{Summary statistics of pre-processed real data sets.}
\vspace{-5pt}
\begin{tabular}{lccc}
\toprule
\textbf{data set} & \textbf{\#students} & \textbf{\#questions} & \textbf{\%observed} \\ \hline
assistment        & 392                  & 747                   & 12.93\%                      \\
algebra           & 697                  & 782                   & 13.02\%                      \\
bridge            & 913                  & 1242                  & 11.42\%                      \\ \bottomrule
\end{tabular}
\label{tab:data-stat}
\end{table}

\vspace{-10pt}
\paragraph{Evaluation metrics} We evaluate the inference algorithms on their ability to recover (predict) the missing entries in the data matrix given the observed entries. We term this criterion ``student answer prediction''. Since our data matrix is binary, using prediction accuracy (ACC) alone does not accurately reflect the performance of the inference algorithms under comparison. Thus, we use area under the receiver operating characteristic curve (AUC) and F1 score in addition to ACC, as standard in evaluating binary predictions~\cite{hastie2009elements}.

\subsubsection{Results} 
Fig.~\ref{fig:synthetic} shows bar plots that compare the 
student answer prediction performance
of VarFA with SPARFA-M and SPARFA-B on all three evaluation metrics. 
We can observe that all three methods perform similarly and that SPARFA-M and SPARFA-B show no statistically significant advantage over VarFA.
\sloppy
We further showcase VarFA's scalability by comparing its training run-time with SPARFA-M and SPARFA-B. The results are shown in Fig.~\ref{fig:run-time}. VarFA is significantly faster than SPARFA-B and is almost as fast as SPARFA-M. 

In summary, with VarFA, we obtain posteriors that allow uncertainty quantification with roughly the \textit{same} computation complexity of computing point estimates and \textit{without} compromising prediction performance. 

\begin{table}[t!]
\caption{Student answer prediction erformance comapring VarFA to SPARFA-M on Assistment, Algebra and Bridge data sets. $\uparrow$ and $\downarrow$ denote higher and lower is better, respectively. VarFA performs better than SPARFA-M on all three data sets and evaluation metrics most of the time. Additionally, VarFA's run time is very close to SPARFA-M.}
\centering
\begin{subtable}{\linewidth}\centering
\caption{Assistment}
\vspace{-5pt}
\begin{tabular}{lcc}
\toprule
\textbf{Metric} & \multicolumn{2}{c}{\textbf{Algorithm}} \\  \hline \\[-8pt]
                & SPARFA-M           & VarFA \\ 
                \cmidrule(l){2-2} \cmidrule(l){3-3}
ACC $\uparrow$  & 0.7074$\pm$0.0044  & \textbf{0.7101}$\pm$0.0048 \\
AUC $\uparrow$            & 0.756$\pm$0.048   & \textbf{0.7635}$\pm$0.0036 \\
F1 $\uparrow$            & 0.7746$\pm$0.0029  & \textbf{0.7765}$\pm$0.0014 \\ \hline
Run time (s) $\downarrow$& \textbf{5.3319}$\pm0.2774$  & 6.9167$\pm$0.1074 \\
\bottomrule
\vspace{1pt}
\end{tabular}
\label{tab:assistiemnt}
\end{subtable}
\begin{subtable}{\linewidth}\centering
\caption{Algebra}
\vspace{-5pt}
\begin{tabular}{lcc}
\toprule
\textbf{Metric} & \multicolumn{2}{c}{\textbf{Algorithm}} \\  \hline \\[-8pt]
                & SPARFA-M           & VarFA \\ 
                \cmidrule(l){2-2} \cmidrule(l){3-3}
ACC $\uparrow$        & 0.7735$\pm$0.0037  & \textbf{0.7774}$\pm$0.0031 \\
AUC $\uparrow$           & 0.8137$\pm$0.003   & \textbf{0.8245}$\pm$0.002 \\
F1 $\uparrow$            & 0.8465$\pm$0.0021  & \textbf{0.8486}$\pm$0.001 \\\hline
Run time (s) $\downarrow$ & \textbf{8.464}$\pm$0.4568 & 10.3335$\pm$0.4435  \\
\bottomrule
\vspace{1pt}
\end{tabular}
\label{tab:algebra}
\end{subtable}
\begin{subtable}{\linewidth}\centering
\caption{Bridge}
\vspace{-5pt}
\begin{tabular}{lcc}
\toprule
\textbf{Metric} & \multicolumn{2}{c}{\textbf{Algorithm}} \\  \hline \\[-8pt]
                & SPARFA-M           & VarFA \\ 
                \cmidrule(l){2-2} \cmidrule(l){3-3}
ACC $\uparrow$       & \textbf{0.8492}$\pm$0.0016  & 0.8468$\pm$0.0016 \\
AUC $\uparrow$            & 0.837$\pm$0.0024   & \textbf{0.8419}$\pm$0.0028 \\
F1 $\uparrow$             & \textbf{0.9121}$\pm$0.0005  & 0.912$\pm$0.0009 \\\hline
Run time (s) $\downarrow$ & \textbf{15.6048}$\pm$0.7314 & 15.8558$\pm$1.046  \\ 
\bottomrule
\end{tabular}
\label{tab:bridge}
\end{subtable}
\label{tab:real-data}
\end{table}

\begin{figure*}[t!]
    \centering
    \begin{subfigure}[b]{0.31\textwidth}
    \includegraphics[width=\linewidth]{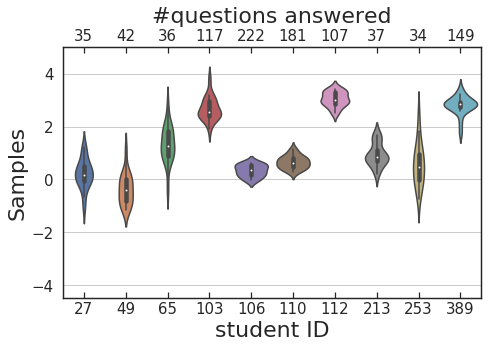}
    \caption{3rd latent concept}
    \label{fig:violin1}
    \end{subfigure}
    \hspace{5pt}
    \begin{subfigure}[b]{0.31\textwidth}
    \includegraphics[width=\linewidth]{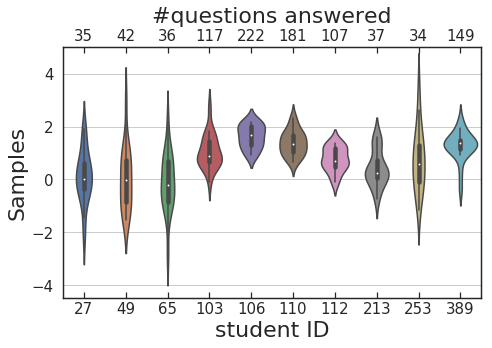}
    \caption{4th latent concept}
    \label{fig:violin2}
    \end{subfigure}
    \hspace{5pt}
    \begin{subfigure}[b]{0.31\textwidth}
    \includegraphics[width=\linewidth]{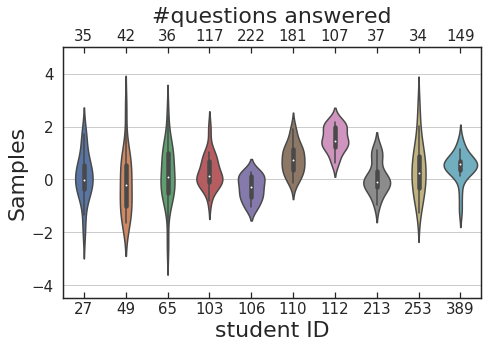}
    \caption{7th latent concept}
    \label{fig:violin3}
    \end{subfigure}
    \caption{Violin plot showing the mean and standard deviation of the estimated skill mastery levels on 10 selected students on the 3rd, 4th and 7th latent skills that VarFA computes. In each sub-figure, bottom and top axises respectively shows student IDs and top axis shows the number of questions each student answered. The more questions a student answers, the tighter the credible interval. (Best viewed in color.)}
    \label{fig:violin}
\end{figure*}

\subsection{Real Data Experiments}
\subsubsection{Setup}
\vspace{-10pt}
\paragraph{Data sets and pre-processing steps} We perform experiments on three large-scale, publicly available, real educational data sets including ASSISTments 
2009-2010 (Assistment)~\cite{assistment}, Algebra I 2006-2007 (algebra)~\cite{Algebra} and Bridge to Algebra 2006-2007 (bridge)~\cite{bridge,Stamper2016The2K}. The details of the data sets, including data format and data collection procedure can be found in the preceding references. We remove students and questions that have too few answer records from the data sets to reduce the sparsity of the data sets. Specifically, we keep students and questions that have no less than 30, 35 and 40 answer records for Assistment, Algebra and Bridge data sets, respectively. 
In each data set, each student may provide more than one answer record for each question. Therefore, we also remove student-question answer records except for the first one.
Table~\ref{tab:data-stat} presents the summary statistics of the resulting pre-processed data matrix for each data set. 

\vspace{-10pt}
\paragraph{Experimental settings and evaluation metrics} Optimizer, learning rate, number of epochs, neural network architecture and evaluation metrics are the same as in synthetic data experiments. For real data sets, we use 8 latent skills instead of 5. Other choices of the number of latent skills might result in better performance. However, since we are comparing different inference algorithms for the \textit{same} model, it is a fair to compare the inference algorithms on the same model with the same number of latent dimensions.
We use a 80:20 data split, i.e., we randomly sample 80\% and 20\% of the observed entries in each data set, without replacement, as training and test sets, respectively. Regularization hyper-parameters are selected by grid search and are different for each data set. We only compare VarFA with SPARFA-M because SPARFA-B does not scale to such large data sets. 

\subsubsection{Results: Performance Comparison}
Table~\ref{tab:real-data} shows the average performance on the test set of each data set comparing VarFA and SPARFA-M for all three data sets and additionally run time. We can see that VarFA achieves slightly better student answer prediction on most data sets and on most metrics. 
Interestingly, recall that, in the preceding synthetic data set experiment, SPARFA-M performed better than VarFA most of the time. 
A possible explanation is that, for synthetic data sets, the underlying data generation process matches the SPARFA model, whereas for real data sets, the underlying data generation process is unknown.
VarFA's better performance than SPARFA-M for real data sets implies that VarFA is more robust to the unknown underlying data generation process. As a result, VarFA may be more applicable than SPARFA-M in real-world situations.

Table~\ref{tab:real-data} also shows the run time comparison between VarFA and SPARFA-M; see the last row in each sub-table. We see that both inference algorithms have very similar run time, showing that VarFA is applicable for very large data sets. Notably, VarFA achieves this efficiency while also performing Bayesian inference on the student knowledge level factor. 

\subsubsection{Results: Bayesian Inference With VarFA}
We now illustrate VarFA's capability of outputting credible intervals using the Assistment data set.
Fig.~\ref{fig:violin} presents violin plots that show the sampled student latent skill levels for a random subset of 10 students. Plots~\ref{fig:violin1},~\ref{fig:violin2} and~\ref{fig:violin3} shows the inferred students ability for the 3rd, 4th and 7th latent skill dimension. In each plot, the bottom axis shows the student ID and the top axis shows the total number of questions answered by the corresponding student. For each student, the horizontal width of the violin represents the density of the samples; the skinnier the violin, the more widespread the samples are, implying the model's less certainty on its estimations.

\begin{table*}[t!]
\caption{Illustration of the estimated latent skills with the their top 3 most strongly associated skill tags in the Assistment data set. The percentage in the parenthesis shows the association probability (summed to 1 for each latent skill). We see that the tagged skills associated with each estimated latent skill form intuitive and interpretable groups. 
}
\centering
\setlength\heavyrulewidth{0.25ex}
\begin{tabular}{ll}
\toprule
\textbf{Latent Skill 1}                                                                                                                                               & \textbf{Latent Skill 3}                                                                                                                            \\ \hline \\[-5pt]
\begin{tabular}[c]{@{}l@{}}Division Fractions (29.1\%)\\ Least Common Multiple (18.1\%)\\ Write Linear Equation from Ordered Pairs (17.8\%)\end{tabular}                    & \begin{tabular}[c]{@{}l@{}}Conversion of Fraction Decimals Percents (7.3\%)\\ Addition and Subtraction Positive Decimals (6.8\%)\\ Probability of a Single Event (5.7\%)\end{tabular} \vspace{10pt}\\ \toprule \\[-10pt]
\textbf{Latent Skill 4}                                                                                                                                               & \textbf{Latent Skill 7}
\\ \hline \\[-5pt]
\begin{tabular}[c]{@{}l@{}}Pattern Finding (17.4\%)\\ Histogram as Table or Graph (11.3\%)\\ Percent Of (10.5\%)\end{tabular} & \begin{tabular}[c]{@{}l@{}}Volume Sphere (13.4\%)\\ Volume Cylinder (10.4\%)\\  Surface Area Rectangular Prism (10.2\%)\end{tabular}        \\ \bottomrule
\end{tabular}
\end{table*}

Results in Fig.~\ref{fig:violin} confirms our intuition that the more questions a student answers, the more certain the model is about its estimation. For example, students with ID 106, 110 and 389 answered 222, 181 and 149 questions, respectively, and the credible intervals of their ability estimation is quite small. In contrast, students with ID 27, 49 and 65 answered far less questions and the credible intervals of their ability estimation is quite large. This result implies that VarFA outputs sensible and interpretable credible intervals. As mentioned in Section~\ref{sec:intro}, such uncertainty quantification may benefit a number of educational applications such as improving adaptive testing algorithms. 
Note that VarFA is able to compute such credible interval as fast as SPARFA-M, making VarFA potentially useful for even real-time educational systems. In contrast, SPARFA-M is not capable of computing credible intervals. Although we may still obtain confidence interval as uncertainty quantification with SPARFA-M via bootstrapping, i.e., train with SPARFA-M multiple times with random subsets of the data set and then use the point estimates from different random runs to compute confidence intervals. However, this method suffers from two immediate drawbacks: 1) it is slow because we need to run the model multiple times and 2) different runs result in permutation in the estimated factors and thus the averaged estimations loss meaning. Given that VarFA is as efficient as SPARFA-M and the previous drawbacks of SPARFA-M, we recommend VarFA over bootstrapping with SPARFA-M for quantifying model's uncertainty in practice.


\subsubsection{Results: Post-Processing for Improved Interpretability}
\label{sec:experiment-post-processing}
SPARFA assumes that each student factor $\nu_i$ identifies a multi-dimensional skill level on a number of ``latent'' skills (recall that we use 8 latent skills in our experiments). As mentioned earlier, these latent skills are not interpretable without the aid of additional information. To improve interpretability,~\cite{lan14a} proposed that, when the skill tags for each question is available in the data set, we can associate each latent skill with skill tags via a simple matrix factorization. Then, we can compute each students' mastery levels on the actual skill tags. We refer readers to Sections 5.1 and 5.2 in~\cite{lan14a} for more technical details. 
Although the above method to improve interpretability has already been proposed,~\cite{lan14a} only presented results on private data sets, whereas here we presents results on publicly available data set with code, making our results more transparent and reproducible.

We again use the Assistment data set for illustration. We compute the association of skill tags in the data set
with each of the latent skills and show 4 of the latent skills with their top 3 most strongly associated skill tags. We can see that each latent skill roughly identify the same group of skill tags. For example, latent skill 4 clusters skill tags on statistics and probability while latent skill 7 clusters skill tags on 
geometry. Thus, by simple post-processing, we obtain an interpretation of the latent skills by associating them with known skill tags in the data.

We can similarly obtain VarFA's estimations of the students' mastery levels on each skill tags through the above process. In Fig.~\ref{fig:skill}, we compare the predicted mastery level for each skill tag (only for the questions this student answered) with the percent of correct answers for that skill tag.
Blue curve shows the empirical student's mastery level on a skill tag by computing the percentage of correctly answered questions belonging to a particular skill tag. Orange curve shows VarFA's estimated student mastery level on a skill tag, normalized to range $[0,1]$. 
We can see that, when the student gave more correct answers to questions of a particular skill tag, such as skill tag ID 2, 10, 14 and 30, VarFA also predicts a higher ability score for these skill tags. When the student gave more incorrect answers to questions of a particular skill tag, such as skill tag ID 0, 4 and 27, VarFA also predicts a lower ability score for those skill tags. Although the correspondence is not perfect, VarFA's predicted student's mastery levels match our intuition about student's abilities (using the observed correct answer ratio as an empirical estimate) reasonably well. Thus, by post-processing, we can interpret VarFA's estimated students' latent skill mastery levels using the easily understandable skill tags. 

\section{Conclusions and Future Work}
\label{sec:conclusion}
We have presented VarFA, a variational inference factor analysis framework to perform efficient Bayesian inference for learning analytics. VarFA is general and can be applied to a wide array of FA models. We have demonstrated the effectiveness of our VarFA using the sparse factor analysis (SPARFA) model as a case study. We have shown that VarFA can very efficiently output interpretable, educationally meaningful information, in particular credible intervals, much faster than classic Bayesian inference methods. Thus, VarFA has potential application in many educational data mining scenarios where efficient credible interval computation is desired, i.e., in adaptive testing and adaptive learning systems. We have also provided open-source code to reproduce our results and facilitate further research efforts.

We outline three possible future research directions and extensions. First,
VarFA currently performs Bayesian inference on the student skill mastery level factor. We are working on extending the framework to perform full Bayesian inference for all unknown factors. Extensions to some models such as IRT is straightforward, as studied in~\cite{wu2020variational}, because all factors can be reasonably assumed to be Gaussian. 
However, some other models involve additional modeling assumptions, making it challenging to design distributions that satisfy these assumptions. For example, in SPARFA, one of the factors is assumed to be nonnegative and sparse~\cite{lan14a}. No standard distribution fulfill these requirements. We are exploring \textit{mixture} distributions, in particular spike and slab models~\cite{ishwaran2005spike} for Bayesian variable selection, and methods to combine them with variational inference, following~\cite{titsias2011spike,enlighten191553}.

Second, other methods exist that accelerate classic Bayesian inference. Recent works in large-scale Bayesian inference proposed \textit{approximate} MCMC methods that scale to large data sets~\cite{zhang2020csgmcmc,ma2015complete,song2017nice}. Some of these methods apply variational inference to perform the approximation~\cite{de2001variational,habib2018auxiliary}. We are investigating extending VarFA to support approximate MCMC methods.

\begin{figure}
    \centering
    \includegraphics[width=\linewidth]{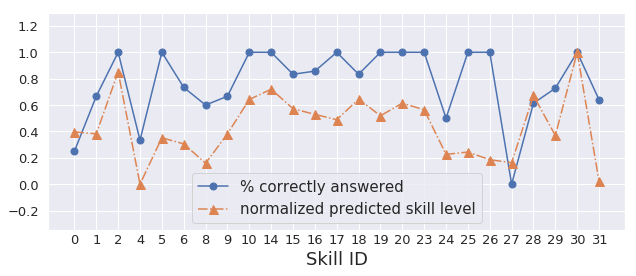}
    \vspace{-15pt}
    \caption{Comparison between the estimated skill mastery levels using VarFA's predictions and using empirical observations for student with ID 110. Even though the two curves show different numeric values, they nevertheless demonstrate similar trends, showing that the predictions reasonably match our intuition about student's skill mastery levels. 
    }
    \label{fig:skill}
    \vspace{-5pt}
\end{figure}

Finally, the goal of VarFA is to improve real-world educational systems and ultimately improve learning. Therefore, it is necessary to evaluate VarFA beyond synthetic and benchmark data sets. We plan to integrate VarFA into an existing educational system and conduct a case study where students interact with the system in real-time to understand the VarFA's educational implications in the wild.

\newpage

\section*{Acknowledgements}
This work was supported by
NSF grants CCF-1911094, IIS-1838177, IIS-1730574, DRL-1631556, IUSE-1842378;
ONR grants N00014-18-12571 and N00014-17-1-2551;
AFOSR grant FA9550-18-1-0478; and a
Vannevar Bush Faculty Fellowship, ONR grant N00014-18-1-2047.

%
\bibliographystyle{abbrv}
\bibliography{sigproc}  
%
%

\end{document}